\newtheorem{theorem}{Theorem}
\newcolumntype{C}[1]{>{\centering\let\newline\\\arraybackslash\hspace{0pt}}p{#1}}
\title{i-Rebalance: Personalized Vehicle Repositioning for Supply Demand Balance}
\author{
    Haoyang Chen\textsuperscript{\rm 1},
    Peiyan Sun\textsuperscript{\rm 1},
    Qiyuan Song\textsuperscript{\rm 1},
    Wanyuan Wang\textsuperscript{\rm 1},\\
    Weiwei Wu\textsuperscript{\rm 1},
    Wencan Zhang\textsuperscript{\rm 2},
    Guanyu Gao\textsuperscript{\rm 3},
    Yan Lyu\textsuperscript{\rm 1}\footnote{Corresponding author.}
}
\begin{document}

\maketitle


\begin{abstract}
Ride-hailing platforms have been facing the challenge of balancing demand and supply. Existing vehicle reposition techniques often treat drivers as homogeneous agents and relocate them deterministically, assuming compliance with the reposition.
In this paper, we consider a more realistic and driver-centric scenario where drivers have unique cruising preferences and can decide whether to take the recommendation or not on their own. 
We propose i-Rebalance, a personalized vehicle reposition technique with deep reinforcement learning (DRL). 
i-Rebalance estimates drivers' decisions on accepting reposition recommendations through an on-field user study involving 99 real drivers. 
To optimize supply-demand balance and enhance preference satisfaction simultaneously, i-Rebalance has a sequential reposition strategy with dual DRL agents: Grid Agent to determine the reposition order of idle vehicles, and Vehicle Agent to provide personalized recommendations to each vehicle in the pre-defined order. This sequential learning strategy facilitates more effective policy training within a smaller action space compared to traditional joint-action methods. 
%
%
Evaluation of real-world trajectory data shows that i-Rebalance improves driver acceptance rate by 38.07\% and total driver income by 9.97\%. The code for our approach is available at https://github.com/Haoyang-Chen/i-Rebalance.



\end{abstract}

\pagenumbering{Roman}
\setcounter{page}{1}

\section{Introduction}

Taxi and ride-hailing services have been facing supply-demand imbalances. Thousands of drivers spend almost 50\% of their working time cruising on road for passengers~\cite{zong2018taxi}. Meanwhile, thousands of travel demands got unserved due to limited vehicle supply nearby. 
There have been efforts on reposition idle vehicles to demanding locations for supply-demand balance with deep reinforcement learning (DRL) techniques~\cite{qin2022reinforcement,yang2020multiagent,liu2021meta}.
%
%
%
By interacting with actual travel demands, these methods can adapt to the complex and dynamic environment and coordinate competing drivers.

However, these techniques hypothesize that \textit{all the drivers unquestionably adhere to the reposition recommendation and follow the recommended cruising route.}
This is impractical because every driver has a unique cruising preference~\cite{cGAIL, TrajGAIL}. Imposing repositioning mandates on drivers would lead to unpleasant working experiences and cause driver loss on ride-hailing platforms. 

To this end, we are motivated to consider a more realistic and driver-centric scenario where drivers could freely accept or reject reposition recommendations. This requires generating personalized reposition recommendation 
to improve the probability of acceptance. 
However, the challenges are:


%




\begin{figure}[t]
    \centering
\includegraphics[width=\linewidth]{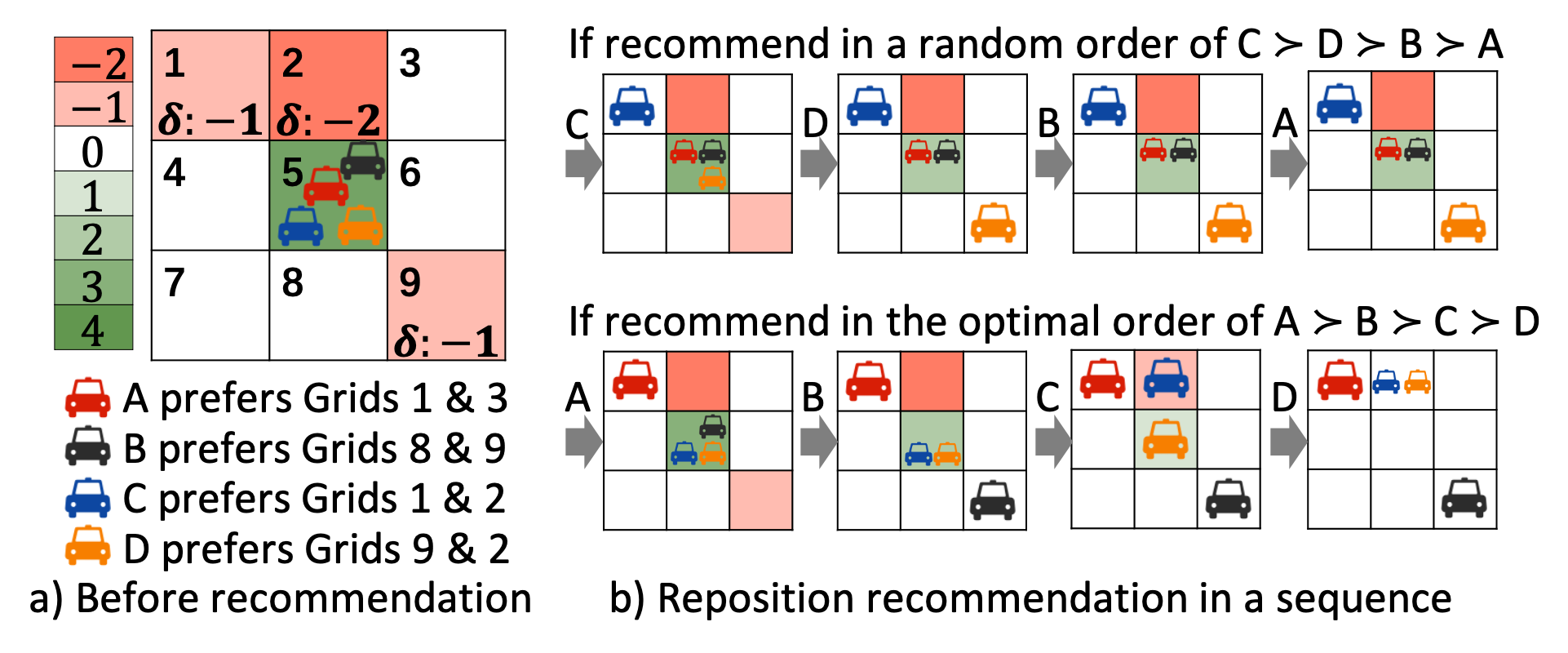}
    \vspace{-0.3in}
    \caption{Impact of recommendation order. a)~Initial scenario: Recommending four idle vehicles in Grid 5. Each prefers two neighboring grids, rejecting non-preferred options to stay at the current grid. The supply-demand gap $\delta$ is color-coded: red signifies shortage, and green indicates excess idle vehicles. 
    b)~Two recommendation orders $C\succ D\succ B \succ A$ (top row) and $A\succ B\succ C\succ D$ (bottom row) lead to different supply-demand balances due to drivers’ diverse preferences.
    }
    \label{Fig:sequenceExample}
    \vspace{-0.2in}
\end{figure}

1)~\textit{Subjective driver acceptance to reposition recommendation}. 
Drivers can decide whether to accept recommendations in most ride-hailing platforms on their own. However, there has been lacking studies on under which circumstances and how likely the driver would accept the reposition recommendation. 
Such decision-making from real drivers should be collected for accurate decision modeling. 

2)~\textit{Personalized repositioning}.  
%
To incorporate unique preferences of drivers, we have to jointly decide each individual driver to visit a specific location. However, learning such a joint action can be challenging due to the large action space. For a smaller action space, we could reposition vehicles one by one, however, \textit{the reposition order of vehicles impacts the resulting supply-demand balance}, due to the fact that drivers with diverse preferences may reject unsatisfactory recommendation. Take Figure 1 as an example of recommending four vehicles in Grid 5 to address the supply-demand gap in neighboring grids. Each vehicle prefers two neighboring grids, rejecting non-preferred options to stay at the current grid. If we recommend in the order of $C\succ D\succ B\succ A$ (top row), drivers C and D will visit more demanding Grids 1 and 9 based on their preference. But the subsequent drivers B and A will reject recommendations to address the demand at Grid 2, leading to an imbalanced supply-demand. Conversely, the order $A\succ B\succ C\succ D$ (bottom row) achieves a more favorable outcome. All drivers visit preferred grids, simultaneously mitigating the supply-demand gap. Therefore, it is necessary to seek for such an order to help with better supply-demand balance and preference satisfaction. 



In this paper, we propose i-Rebalance, a personalized vehicle repositioning technique to balance supply and demand. To estimate drivers’ subjective decisions on reposition recommendation, we recruited 99 real drivers and conducted an on-field user study to collect their decision makings. For personalized repositioning, i-Rebalance sequentially repositions vehicles with dual DRL agents: Grid Agent and Vehicle Agent. Grid Agent learns \textit{the repositioning order of idle vehicles} within a grid, while Vehicle Agent learns \textit{personalized repositioning to each vehicle} in the pre-defined order. By joint training of the two agents, i-Rebalance can simultaneously optimize supply-demand balance and driver acceptance, facilitating policy training with a small action space. In summary, our contributions are: 

%
\begin{itemize}
    \item A personalized vehicle reposition technique i-Rebalance to balance supply and demand with DRL. It improves cruising preference satisfaction of drivers and their acceptance of repositions.
    
    
    \item A user study on 99 recruited ride-hailing drivers to estimate the likelihood of reposition acceptance. 

    \item 
    A sequential vehicle reposition framework with dual DRL agents to determine the reposition order and then the personalized reposition for each vehicle by the order.

    
    \item Evaluation on real-world taxi trajectory data shows that i-Rebalance greatly improves reposition acceptance of drivers and total driver income compared to baselines. 
    
\end{itemize}

\section{Related Work}

Recently, deep reinforcement learning (DRL) has been widely used in vehicle repositioning problems~\cite{qin2022reinforcement,liu2022deep,jiao2021real}. 
By interacting with a simulated ride-hailing system, DRL techniques are able to capture the interactions between repositioning actions and dynamic demands and can coordinate competing vehicles. Centralized strategies have been explored, where a single agent is trained to govern reposition for all vehicles~\cite{he2020spatio,oda2018movi,liu2020context}. However, such approaches suffer from extensive joint action search spaces. In contrast, decentralized approaches consider each vehicle as an independent agent~\cite{al2019deeppool,wen2017rebalancing}, but face the non-stationary exploration problems in multiagent RL tasks~\cite{gao2022distributed}.
%
%


To bridge traditional vehicle repositioning methods ~\cite{8105835,xie2018privatehunt,xu2018taxi} and DRL, some integrate request-matching and vehicle-repositioning into a single reinforcement learning framework~\cite{xu2023multi}, while others leverage hierarchical reinforcement learning to coordinate vehicles~\cite{jin2019coride,xi2022hmdrl}. Further efforts focus on simplifying the model, such as treating multiple agents in the same location as homogeneous agent to share the same policy network~\cite{lin2018efficient}; averaging actions of agents in the same grids~\cite{li2019efficient}; and combining vehicle-location maximum matching optimization during DRL training~\cite{liu2021meta}.
%
%


%
However, these methods assume each driver complies with the reposition recommendations unconditionally.
In practice, drivers have their own preference on cruising routes~\cite{cGAIL,TrajGAIL} and can reject reposition recommendations in most ride-hailing platforms. Recently, \cite{he2020spatio} includes collective driver preferences in their repositioning system that helps recommend locations that most drivers like to visit, without personalization.
\cite{xu2020recommender} integrates driver acceptance into its reposition model, but also overlooks personalized driver preferences.
In contrast, i-Rebalance is the first to consider drivers' personalized cruising preferences and their freedom of acceptance of reposition.
Specifically, we predict driver preferences with a LSTM network and conduct a user study to estimate the likelihood of acceptance. We propose a novel sequential vehicle reposition framework to learn reposition order of idle vehicles and to learn personalized reposition for each vehicle in the pre-defined order, respectively.

%

\begin{figure*}[!ht]
    \centering
    \includegraphics[width=0.97\textwidth]{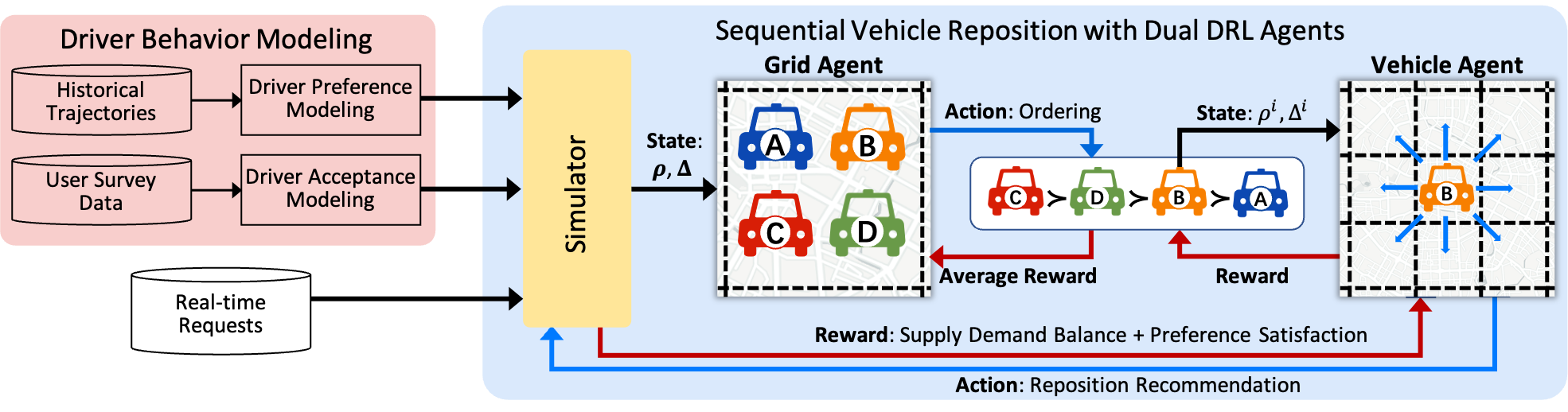}
    \vspace{-0.1in}
    \caption{Overview of i-Rebalance. 
    i-Rebalance comprises two phases: 1)~\textit{Driver Behavior Modeling} simulates realistic driver decision-making by predicting their cruising preferences and reposition acceptance probabilities. 2)~\textit{Sequential Vehicle Reposition with Dual DRL Agents} interacts with the simulator. Grid Agent observes nearby supply-demand gap $\Delta$ and driver preference $\mathbf{\rho}$ and determines the repositioning order of idle vehicles within the grid, e.g., $C\succ D \succ B \succ A$. By this order, Vehicle Agent observes individual preference $\rho^i$ of driver $i$ and real-time updated supply-demand gap $\Delta^i$, and recommends reposition destinations for this vehicle. It receives rewards of supply-demand balance and preference satisfaction for each recommendation, while Grid Agent receives the average rewards after all recommendations.
    }
    \label{Fig:Overview}
    \vspace{-0.1in}
\end{figure*}

\section{Problem Statement}

We split the entire city into nonoverlapping rectangle grid map~\cite{he2020spatio}, and time of a day into equal-length time intervals (e.g., 10 minutes). 
Based on this setting, we formally define the following concepts and the problem of personalized reposition recommendation.

\textbf{Driver Cruising Preference}. 
Each taxi driver has a unique preference for cruising routes for passengers. For example, some may prefer to hang around hot locations such as train stations and hospitals, some may prefer familiar locations near home. 
We define the cruising preference of driver $i$ located in grid $g$ at time $t$, denoted by $\rho_t^i$, by the probability of how likely the driver is willing to visit the neighboring $3\times 3$ grids (including the current grid $g$). This will be estimated from the driver's historical cruising trajectories. 

\textbf{Reposition Recommendation}. 
Ride-hailing platforms often provide recommendations on where to hunt for passengers to balance supply and demand. 
Instead of recommending an arbitrary location~\cite{liu2022deep} on the map, we define a reposition recommendation at time $t$ as visiting one of the neighboring $3\times 3$ grids (including the current grid $g$) for smaller decision-making space~\cite{liu2020context}.

\textbf{Driver Acceptance} towards reposition recommendation. We define a driver's acceptance as the probability of accepting the recommendation, given the driver's own cruising preference. This will be estimated by our on-field user study.


\textbf{Supply-Demand Gap}. For each grid $g$ at time $t$, we define supply-demand gap, denoted by $\delta_t^g$, as the difference between the number of idle vehicles and the number of ride requests within grid $g$ at time $t$.

\textbf{Problem Statement}. i-Rebalance aims to generate a reposition recommendation for each idle vehicle at each time step $t$ to minimize the overall absolute supply-demand gap and maximize the driver acceptance at the same time. 

Next, we present the overview of i-Rebalance, followed by two driver behavior models to predict driver preferences and their acceptance of recommendations, and a sequential vehicle reposition framework with dual DRL agents.

\section{System Overview}

i-Rebalance comprises two distinct phases: 1)~\textit{Driver Behavior Modeling} for realistic driver decision-making simulation, and 2)~\textit{Sequential Vehicle Reposition with Dual DRL Agents} to learn personalized recommendation via interacting with the simulated drivers and environment (see Figure~\ref{Fig:Overview}).

\textit{Driver Behavior Modeling} predicts each driver's cruising preferences based on their own historical trajectories, and estimates the probability of a driver accepting a reposition recommendation through an on-field user study. The two models are then integrated into the simulator, enabling us to simulate drivers' cruising routes and their likelihood of accepting a reposition recommendation.

\textit{Sequential Vehicle Reposition with Dual DRL Agents} interacts with the simulator to optimize vehicle repositioning across grids.
%
%
To provide personalized reposition, we have to decide the assignment of each individual driver to a specific location. Training for such a joint-action faces challenges of a vast action space. A potential solution is to reposition vehicles one by one in a smaller action space. However, this introduces a new challenge: the repositioning order significantly impacts supply-demand balance. This is because drivers have diverse preferences and may reject unsatisfactory recommendation. Earlier driver decisions affect subsequent ones and thereby the resulting supply-demand balance (as illustrated in Figure~\ref{Fig:sequenceExample}).

%

%
Motivate by this, we seek for the optimal repositioning order that can help with better supply-demand balance and preference satisfaction. Specifically, we employ a Grid Agent to determine the reposition order first. By this order, a Vehicle Agent provides personalized recommendation for each vehicle. Grid Agent observes nearby supply-demand gap and cruising preference of drivers within the grid while Vehicle Agent focuses on each vehicle’s own preference. Vehicle Agent receives intermediate reward for supply-demand balance and preference satisfaction, Grid Agent receives the average of Vehicle Agent's reward for using its order. By training the two agents jointly, both supply-demand balance and preference satisfaction can be optimized simultaneously.

\section{Driver Behavior Modeling}\label{sec:Preference}

\subsection{Driver Cruising Preference}

We propose a lightweight LSTM based network, to predict driver's next cruising directions based on individual past cruising trajectories. The prediction will be used to 1)~estimate driver acceptance to a recommendation, and 2)~simulate preferred cruising directions upon rejection.

As illustrated in Figure~\ref{Fig:PreferenceModeling}, the two-layer LSTM network takes the input of factors including i)~the driver's own working states, such as how long she has been cruising and how much she has earned, and ii)~the current location, which can be presented by distances to a set of POIs, traffics, supply and demand nearby. It predicts probabilities $\rho_t$ of the driver willing to visit the neighboring $3 \times 3$ grids.
See the detailed feature engineering and network description in Appendix A.
This lightweight network improves predictions for drivers with limited historical cruising data, and it's more efficient than larger neural networks~\cite{TrajGAIL,cGAIL}. However, existing trajectory prediction networks are also applicable.

\begin{figure}[t]
    \centering
    \includegraphics[width=\linewidth]{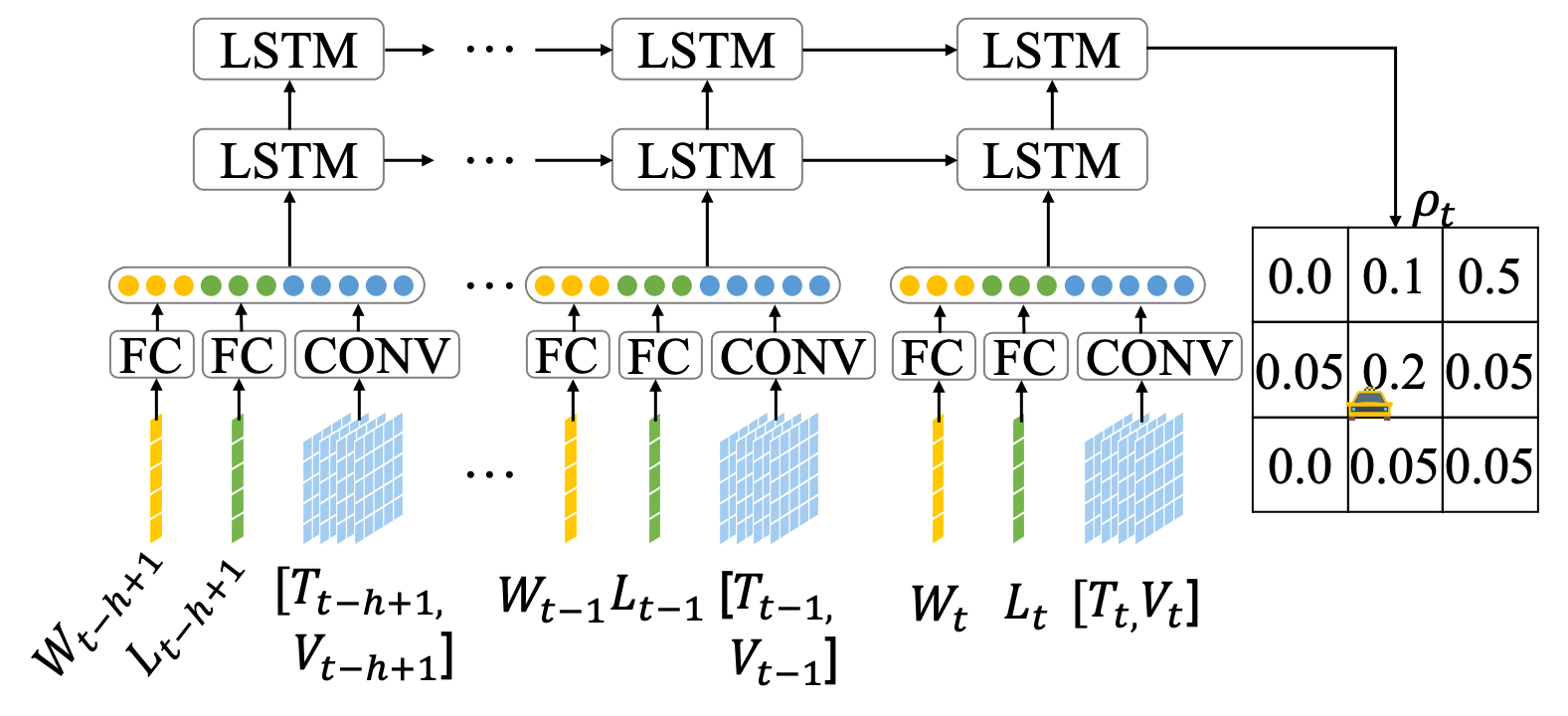}
    \vspace{-0.2in}
    \caption{Cruising preference prediction network. The network takes a sequence of the features including driver states $W_t$, POI distances $L_t$, traffics $T_t$, and supply-demand features $V_t$ from time $t-h+1$ to $t$ as input and predicts probabilities $\rho_t$ of the driver visiting the neighboring $3 \times 3$ grids.}
    \label{Fig:PreferenceModeling}
    \vspace{-0.15in}
\end{figure}

\subsection{Driver Acceptance to Reposition Recommendation}


To estimate the likelihood of a driver accepting a reposition recommendation, we conducted an on-field user study to collect drivers' decisions in various working scenarios. 

We recruited drivers from the Didi platform by requesting rides at a few selected origin locations (including business centers, hospitals, universities, and scenic spots) to a random destination in our city.
%
After a driver comes for the pick-up, she is asked to park her vehicle at a safe place (mostly at the origin) for an interview. 
Drivers followed the procedure: 
1)~The driver answers demographic questions such as working experience and income per ride request. She is also asked to estimate how often she accepts the reposition instructions from the platform (Obedience).
2)~She proceeds to the main survey with 3 trials of different maps with different decision-making questions. 
For each trial, she sees a map with $5 \times 5$ grids of a region in the city and is asked whether she is familiar with the region. She is then asked to imagine herself located in the center grid with an empty vehicle and to rank the neighboring 9 grids (including the center grid) as the next preferred cruising direction. 
We then provide a repositioning recommendation, i.e., visit one of the 9 ranked grids (Preference Ranking), as well as Expected Income if taking the recommendation, and ask the driver to decide whether to accept the recommendation or not. 
For every trial, each driver answers 10 decision-making questions, each with a randomly selected ranked grid as a recommendation and one of 7 expected incomes ranging from 6 to 16.  
%


We recruited 99 Didi drivers at 11 locations, with working experience 0.1-8 (Mean = 3.2) years, and reported income of 10 - 20 (Mean = 14.6) RMB per ride. 
We excluded 6/105 participants who did not complete all questions or gave unreliable answers.
Participants completed the survey questions within 20 minutes and were compensated 12 RMB (\$1.50 USD).
We formulated whether a driver would accept or reject our reposition recommendation as a binary classification problem. We fit a binomial logistic regression model with \textit{Preference Ranking} (denoted as $r$), \textit{Expected Income} (denoted as $m$), and \textit{Driver's Obedience} (denoted as $o$) as independent variables: 
\begin{equation}
P_{accept} = 1 / (1+\exp(-(b + w_rr + w_mm + w_oo))).
\label{eq:logitModel1}
\end{equation}
%

\begin{table}[ht]
\centering
\vspace{-0.1in}
\footnotesize
\begin{tabular}{r@{}C{1.2cm}@{}@{}C{1.0cm}@{}@{}C{1.0cm}@{}@{}C{1.2cm}@{}@{}C{0.8cm}@{}}
\hline
Variable            & Coef. & SE        & $\chi^2$ & p                & R2 \\ 
\hline
Intercept           & -1.31 & 0.26      & 25.06     & \textless{}.0001 & \multirow{4}{*}{.3794} \\
Preference Ranking  & -0.44 & 0.03      & 273.50    & \textless{}.0001 &    \\
Expected Income     & 0.29  & 0.02      & 172.75    & \textless{}.0001 &    \\
Driver's Obedience  & 2.17  & 0.22      & 95.93     & \textless{}.0001 &    \\
\hline
\end{tabular}
\caption{Statistical analysis of driver responses on the binomial logistic regression model predicting the likelihood to accept a repositioning recommendation.}
\label{Table:LogitRegression}
\end{table}

We found that all three factors significantly influenced drivers' willingness to accept the recommendation (Table~\ref{Table:LogitRegression}). The model achieved a classification accuracy of 76.37\% and an area under the ROC curve (AUC) of 0.8266. 
According to the partial dependence plot (Figure \ref{Fig:PDP}), we also observed that drivers have a higher chance of rejecting the less preferred recommendation (lower ranked).  
This indicates that they are more willing to collaborate with the system if it can provide highly personalized recommendations. 
Conversely, drivers are less likely to reject the recommendation if the recommendation helps to bring a higher expected income.
%
Also, a driver who is inherently more obedient exhibits a higher likelihood of accepting the recommendation. 

These statistical results and analysis indicate a good fit of the binomial logistic regression model. We use this model (Eq.~\ref{eq:logitModel1}) with $b = -1.31, w_r = -0.44, w_m = 0.29, w_o = 2.17$ (see  Table~\ref{Table:LogitRegression}) to simulate driver acceptance to a reposition recommendation later. 



\begin{figure}[t]
    \centering
    \includegraphics[width=0.9\linewidth]{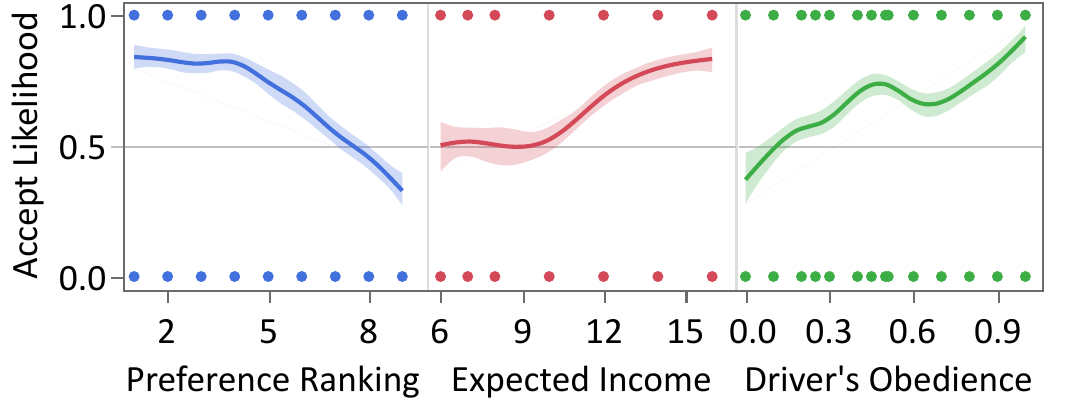}
    \vspace{-0.1in}
    \caption{Partial dependence plot of independent variables.}
    \label{Fig:PDP}
    \vspace{-0.15in}
\end{figure}



\section{Sequential Reposition with Dual DRL Agents}

In this section, we present a sequential vehicle reposition framework with dual DRL agents to learn personalized reposition recommendations.

\subsection{Order Determination by Grid Agent}

We define a \textbf{Grid Agent} to learn the optimal order for recommending idle vehicles within a grid. 
%
%
Its state, action, and reward are defined as follows:

\textbf{State} of Grid Agent, denoted by $S_G$, includes the predicted supply-demand gap in neighboring grids at time step $t+1$, and preferences of available drivers in this grid, i.e.,
%
\begin{equation}
    S_G=\langle \Delta_{t+1},\rho_t^1,\rho_t^2,\dots,\rho_t^n \rangle, 
\end{equation}
where $\Delta_{t+1}$ denotes a vector of supply-demand gaps of neighboring $3\times3$ grids (including the current grid) at time $t+1$, i.e., $\Delta_{t+1} = (\delta_{t+1}^1, \delta_{t+1}^2, \cdots, \delta_{t+1}^9)$. 
The supply-demand gap $\delta_{t+1}^k$ of grid $k$ is estimated from the number of idle vehicles (supply) that will stay or arrive at the grid, subtracted by the number of requests (demand) at time $t+1$  predicted by a recurrent graph convolution network~\cite{wang2020traffic}. 
$\rho_t^i$ denotes the preference (i.e., the probability of visiting each of the 9 grids) of $i$-th driver, $1 \leq i \leq n$. 
Note that the number of available drivers $n$ in a grid is dynamic. 
To ensure dimension consistency for state features, we employ a preference matrix capable of accommodating a larger driver count with non-driver entities padded with zeros.

\textbf{Action} of Grid Agent, denoted by $A_G$, is a vector of priority scores assigned to each driver $i$, i.e., $ A_G = (a_1, a_2, \cdots, a_n)$, where  $0 \leq a_i \leq 1$, $1 \leq i \leq n$.  
%
%
Later on, Vehicle Agent will recommend the drivers based on the order $\succ=(\succ_{1},\cdots,\succ_{n})$ ranked by their priority scores.
%

\textbf{Reward} of Grid Agent, denoted by $R_G$, is the average reward received by Vehicle Agent after all the idle vehicles get recommendations and repositioned by the output order.

\subsection{Personalized Recommendation by Vehicle Agent}

We define a \textbf{Vehicle Agent} to learn personalized recommendations for repositioning idle vehicles within a grid, following the order determined by the Grid Agent.
%
%
The definitions of state, action, and reward are detailed below.

\textbf{State} of a vehicle agent for recommending to driver $i$, denoted by $S_V^i$, includes the real-time updated supply-demand gap in future time $t+1$, denoted by $\Delta_{t+1}^i$,  along with the cruising preference of driver $i$, denoted by $\rho_t^i$, i.e.,
\begin{equation}
    S_V^i = \langle \Delta_{t+1}^i,\rho_t^i \rangle,
\end{equation}
$\Delta_{t+1}^i$ is calculated from the initial supply-demand gap $\Delta_{t+1}$ before recommending for the grid and the reposition decisions made by drivers leading up to the $i$-th driver's turn.
%
$\rho_t^i$ denotes the preference of driver $i$.

\textbf{Action} of a vehicle agent, denoted by $a_V^i$, is to recommend driver $i$ to visit one of the 9 grids $g$. 
%
%
Different from deterministic repositioning methods,
our action simply provides a reposition recommendation, whether the driver accepts the recommendation or insists on her own cruising preferences is determined by herself.  

\textbf{Reward} of a recommendation action should be able to measure supply-demand balance, as well as preference satisfaction to the recommendation. Specifically, 

\textit{Balance Reward}, denoted by $R_B^i(S_V^i,a_V^i)$, is a standardized supply-demand gap at the recommended grid $a_V^i$, i.e., 
\begin{equation}
    R_B^i(S_V^i,a_V^i)= - (\delta_{t+1}^{a_V^i}- \mu_g)/ \sigma_g,
\end{equation}
Here, $\mu_g$ and $\sigma_g$ denote the mean and standard deviation of the supply-demand gap of the neighboring 9 grids of the grid $g$, where $g$ is the current location of driver $i$. This reward guides drivers to visit the most demanding grids.

\textit{Preference Satisfaction Reward}, denoted by $R_P^i(S_V^i,a_V^i)$, measures how much a reposition recommendation satisfies driver's preference. 
It is the min-max normalized rankings of driver $i$'s preferences on the 9 neighboring grids.
%
%
A higher-ranking recommendation gets a higher preference reward.

Overall, the total reward of a reposition recommendation is the summation of the two types of rewards, i.e., 
\begin{equation}
    R_V^i (S_V^i,a_V^i)=  \alpha_B R_B^i(S_V^i,a_V^i) + \alpha_P R_P^i(S_V^i,a_V^i),
    \label{Eq:totalRewards}
\end{equation}
where $\alpha_B$ and $\alpha_P$ are weights of corresponding rewards.

\subsection{Reposition Algorithm} 

i-Rebalance adopts an A2C network to learn personalized reposition policies. 
Algorithm~\ref{algorithm HRLDS} depicts general procedures of training i-Rebalance.
We define an episode as a day and divide the daytime into equal-length time steps. 

The algorithm initializes Grid Agent network $\theta_G$ and Vehicle Agent network $\theta_V$ with vacant memory buffers $M_G$ and $M_V$, respectively. 
At time step $t$, we predict driver preference by our LSTM network (Fig.~\ref{Fig:PreferenceModeling}) and the demand in each grid at time $t+1$. 
Then, we start to reposition idle vehicles in each grid in a radial order starting from the center grid and spiraling outward (Lines~\ref{Line:trainingStart} to \ref{Line:trainingEnd}).
The Grid Agent $\theta_G$ observes the state $S_G$ for a grid, generates idle vehicle order $A_G$, and stores this experience $[S_G, A_G]$ into memory $M_G$.
Vehicle Agent then generates a reposition recommendation $A_V$ for each idle vehicle order $A_G$.
The simulator emulates driver's decision on whether to accept the recommendation or visit the preferred grid by our binomial logistic regression model (Eq.~\ref{eq:logitModel1}). 
Vehicle Agent gets immediate Reward $R_V^i$ after recommending vehicle $i$. The transition $[S_V,A_V,R_V^i]$ is stored to replay buffer $M_V$.
Once all idle vehicles in the grid are repositioned, Grid Agent gets the average reward $R_G$ of all those immediate rewards.
After all idle vehicles are processed, our simulator updates demands and vehicle supplies by repositions outcome and the drop-offs of occupied vehicles (Line~\ref{line:demandupdate}) for time $t+1$. The demands are then matched to their nearest idle vehicles.
Finally, replay buffers are updated by the next state and both Grid Agent and Vehicle Agent are updated by the newest buffers separately.

\begin{algorithm}[tb]
    \caption{\footnotesize Personalized Taxi Repositioning}\label{algorithm HRLDS}
    \label{alg:algorithm}
    \footnotesize
    \textbf{Input}: Max episodes $N$, total time step $T$\\
    \textbf{Output}: Grid Agent $\theta_G$, Vehicle Agent $\theta_V$ \\
    \vspace{-\baselineskip}
    \begin{algorithmic}[1] 
        \STATE Memory $M_V, M_G\leftarrow \emptyset$, randomly initialize network $\theta_V, \theta_G$.
        \WHILE{episode $n<N$}
            \STATE Simulator reset
            \WHILE{$t<T$}
                \STATE Predict demands at $t+1$ and driver preference
                    \FOR{each grid in radial sequence} \label{Line:trainingStart}                       
                        \STATE Get state $S_G$ for grid agent
                        \STATE Generate a repositioning order $A_G$ by network $\theta_G$
                        \STATE Store $[S_G,A_G]$ to $M_G$
                        
                        \FOR{each available driver following $A_G$}
                        \STATE Get state $S_V$ for vehicle agent
                        \STATE Recommend a reposition $A_V$ by network $\theta_V$ 
                        \STATE Driver decides to execute $A_V$ or visit preferred grid
                        \STATE Get rewards $R_B$ and $R_P$
                        \STATE Store $[S_V,A_V,R_P+R_B]$ to $M_V$
                        \ENDFOR
                    \STATE Get average reward $R_G$ and update to $M_G$
                \ENDFOR \label{Line:trainingEnd}
                \STATE Updates real demand and supply\label{line:demandupdate}
                \STATE Match demands to their nearest idle vehicles
                \STATE Update next state $S_V'$ to $M_V$ for repositioned vehicles
                \STATE Replay from buffer $M_V$ to update $\theta_V$
                \STATE Replay from buffer $M_G$ to update $\theta_G$ 
            \ENDWHILE
        \ENDWHILE
    \end{algorithmic}
    
\end{algorithm}



We further prove that our sequential reposition framework 
can achieve the same total rewards as learning joint action of repositioning all idle vehicles simultaneously in a grid.

\begin{theorem}
    Given the state at the grid $g$, $S_G=\langle \Delta_{t+1},\rho_t^1,\rho_t^2,\dots,\rho_t^n\rangle$, the proposed sequential reposition  framework learns reposition order and reposition policy can achieve the same reward with that of the optimal policy of assigning vehicles to locations simultaneously.
\end{theorem}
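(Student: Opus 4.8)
The plan is to argue by an exchange/enumeration argument that the sequential decomposition does not shrink the set of reachable outcomes. Fix the grid $g$ and the state $S_G=\langle \Delta_{t+1},\rho_t^1,\dots,\rho_t^n\rangle$. The key observation is that, because drivers act one at a time and each driver's acceptance decision and (upon rejection) cruising move depend only on the recommendation given to that driver together with the state visible at her turn, any assignment of final destinations produced by a ``simultaneous'' joint policy corresponds to a terminal leaf of the sequential decision tree, and conversely. So I would first make precise the outcome space: a joint outcome is a tuple $(\ell^1,\dots,\ell^n)$ of realized destination grids, and the total grid reward is $\tfrac1n\sum_i R_V^i$, which by Eq.~(\ref{Eq:totalRewards}) decomposes additively over drivers into balance and preference terms. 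The point is that this objective is a function of the multiset of realized destinations (through $\delta_{t+1}^{\cdot}$ as updated, and through each driver's own preference ranking), exactly the same quantity the joint policy optimizes.

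Next I would show the two directions of containment. (i) Any outcome achievable by the sequential framework is achievable by some joint assignment: trivially, read off the realized destinations $(\ell^1,\dots,\ell^n)$ and let the joint policy output them. (ii) Any outcome $(\ell^1,\dots,\ell^n)$ achievable by an optimal joint assignment is reproducible by the sequential framework for a suitable choice of order $\succ$ (Grid Agent action $A_G$) and recommendation policy (Vehicle Agent actions $A_V$): here I would invoke the mechanism by which a recommendation becomes a realized destination — if $\ell^i$ is the current grid $g$, recommend anything the driver will reject, or recommend $g$ itself; if $\ell^i$ is a preferred grid, recommend $\ell^i$ and, by the monotonic acceptance model of Eq.~(\ref{eq:logitModel1}), with high-ranked/suitably-incentivized recommendation the driver accepts. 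The order $\succ$ is chosen so that every $\ell^i$ is still ``available'' (the supply-demand accounting $\Delta_{t+1}^i$ has not been spoiled by earlier drivers in a way that changes $R_V^i$) — and since the final reward depends only on the realized multiset, any order that yields the same multiset yields the same reward, so in fact the order freedom is exactly what lets the sequential policy match the best joint outcome. Concluding, the optimal sequential policy (optimizing over both $\theta_G$ and $\theta_V$) attains $\max$ over the same outcome set as the optimal joint policy, hence the same reward.

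The main obstacle I anticipate is direction (ii): making rigorous the claim that for the target multiset of destinations there always exists an order and a recommendation sequence under which each driver's stochastic decision deterministically (or with the intended probability) lands on $\ell^i$. This hinges on two facts that must be stated carefully — first, that a driver who rejects simply stays at $g$ (so ``staying'' is always a freely reachable outcome), and second, that the acceptance probability $P_{accept}$ in Eq.~(\ref{eq:logitModel1}) is monotone in preference ranking and in expected income, so a recommendation matching the driver's own top-ranked available grid is accepted with the maximal available probability; if the joint-optimal assignment ever sends a driver to a grid she would reject with nonzero probability, the sequential framework can do no worse because it faces the identical stochastic primitive. I would therefore phrase the theorem's equality as equality of the optimal expected reward over the shared outcome distribution, and note that the per-step reward additivity (balance plus preference, summed over drivers, then averaged) is what makes the objective order-invariant given a fixed realized multiset — this additivity is the lemma I would isolate and prove first, as everything else reduces to it.
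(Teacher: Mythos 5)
Your proposal is correct and follows essentially the same route as the paper: both arguments rest on the additivity of the grid reward over drivers (so that the expected global reward depends only on which recommendations are issued, not on the order), together with the observation that the sequential framework can replay any optimal joint recommendation vector one driver at a time, after which the learned order and per-driver policy can only improve on it. The one point to tighten is the one you already flag yourself: since acceptance under Eq.~(\ref{eq:logitModel1}) is stochastic, the object to replicate is the vector of \emph{recommendations} (which induces the same distribution over realized destinations), not the realized destination multiset itself --- which is exactly how the paper phrases it, by comparing expected rewards $\mathbb{E}_{P_{accept}}[\cdot]$ rather than outcomes.
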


\begin{proof}
At the state $S_G$, assume that there are $n$ idle vehicles being repositioned and the optimal joint reposition action $\vec{a}^{*}=\langle a_1^*, a_2^*,\cdots, a_n^*\rangle$, which has the expected immediate global reward $\mathbb{E}_{P_{accept}}[\sum_{r\in Neg(g)}\alpha_{B} R_B^{r}+\sum_{i=1}^{n}\alpha_{P} R_{P}^{i}]$, where $Neg(g)$ denote the neighboring grids of $g$ and $R_B^{r}=- (\delta_{t+1}^{r}- \mu_g)/ \sigma_g$ denotes the balance reward of the neighboring grid $r$. Given the joint action $\vec{a}^{*}$, and a randomized order $\succ=(\succ_{1},\cdots,\succ_{n})$, then the same immediate expected global reward $\mathbb{E}_{P_{accept}}[\sum_{r=1}^{9}\alpha_{B} R_B^{r}+\sum_{i=1}^{n}\alpha_{P} R_{P}^{i}]$ will be achieved by the Grid agent if each vehicle $i$ is repositioned by the action $a_i^{*}$ in sequence according to the order $\succ$. In our sequential reposition mechanism, the reposition action of the latter vehicle agent depends on the former vehicle agent's action. Thus, given the $(j-1)$th vehicle's action, if the $j$th vehicle agent would like to change his action from $a_{j}^{*}$ to $a_{j}'$, a larger individual reward must be achieved, i.e.,  
\begin{align} \notag
R_{V}^{j}(S_V^{j},a_{j}')\hspace{-3pt}=\hspace{-3pt}\alpha_{B}R_{B}^{i}(S_V^{j},a_{j}')\hspace{-3pt}+\hspace{-3pt}\alpha_{P}R_{P}^{j}(S_V^{j},a_{j}')\hspace{-3pt}>\hspace{-3pt} R_V^{j}(S_V^{j},a_{j}^{*}).
\end{align}
Because the immediate global reward received by the joint action (i.e., $\mathbb{E}_{P_{accept}}[\sum_{r=1}^{9}\alpha_{B} R_B^{r}+\sum_{i=1}^{n}\alpha_{P} R_{P}^{i}]$) is the sum of the individual vehicle agent's reward, 
improving the action from $a_{j}^{*}$ to $a_{j}'$ also improves the immediate global reward.
%
%
In summary, for any joint reposition policy $\vec{a}^{*}=\langle a_1^*, a_2^*,\cdots, a_n^*\rangle$, our sequential reposition strategy can also find a reposition order and reposition actions that produce immediate global reward not less than that of $\vec{a}^{*}$.   
\end{proof}


\section{Evaluation}
In this section, we evaluate the performance of i-Rebalance in a simulator built with a real-world taxi trajectory dataset. In the following, we first describe experimental settings and then discuss experimental results.  

\subsection{Experimental Settings}

\subsubsection{Environment Simulator}


We employ and extend an open-source simulator~\cite{liu2020context} that simulates ride-hailing platform operations to receive ride requests and dispatch the requests to the nearest idle vehicles. The ride requests are sampled from a real-world taxi trajectory dataset, and driver starting times and locations are initialized accordingly. 
The simulator plans routes with Dijkstra shortest path algorithm and calculates travel time by estimating speed from the dataset for more realistic simulation.

We divided the city map into 1.2km $\times$ 1.2km rectangle grids, and the day into 10 minutes time intervals (suitable for most vehicles to traverse a grid). The simulator predicts cruising directions to neighboring grids (or stay in the same grids) for each driver every time step by our LSTM network (Fig.~\ref{Fig:PreferenceModeling}). Meanwhile, the simulator emulates the driver decision makings on accepting reposition recommendations by Eq.~\ref{eq:logitModel1}. The driver either visits the recommended location, or randomly visits one of the top preferred four grids (See Appendix A for preference prediction performance).
%
%
%


\subsubsection{Datasets}

We use a real-world taxi trajectory dataset collected in Chengdu, China, with a total of 14,865 taxis and 10,710,949 passenger trips within a month~\cite{lyu2019od}. 
For experiment efficiency, we sampled 500 taxis and 243,679 passenger demands. 
The road network of Chengdu is obtained from OpenStreetMap, spanning the city center within 30.62 - 30.69 latitude and 107.01 - 107.11 longitude.

\subsubsection{Baseline Techniques} 


We compare the following methods: 
\begin{itemize}[leftmargin=0.1in,itemsep = -0.02in,topsep=0.01in]
    \item \textit{No Reposition} does not reposition drivers but lets them cruise as estimated preferences in the simulator.
    \item \textit{Random} recommends idle vehicles to a random one of neighboring $3\times3$ grids in a random order.
    \item \textit{Demand Greedy} recommends vehicles to the current most demanding neighboring grid in a random order.
    \item \textit{Reward Greedy} suggests the most demanding and satisfactory neighboring grid to maximize reward by Eq.\ref{Eq:totalRewards}.
    \item \textit{Min Cost Flow}~\cite{ming2020effective} formulates vehicle repositioning as a min cost flow problem and minimizes repositioning cost while maximizing the total served demands. 
    \item \textit{Contextual DDQN}~\cite{liu2020context} uses DDQN to reposition idle vehicles and coordinates them by encoding prior driver decisions into subsequent driver observations.
    \item \textit{Proportional Reposition} defines its action as redistributing proportions of idle vehicles from a grid to its neighbors~\cite{mao2020dispatch}. This facilitates coordination among idle vehicles akin to joint action methods~\cite{liu2022deep}, yet maintains a relatively smaller action space.    
    \item \textit{By Collective Preference} considers collective preference of drivers by their historical visiting frequency~\cite{he2020spatio}, and thereby tends to recommend demanding grids that most drivers like to visit. 
    
\end{itemize}





\subsubsection{Metrics} 
We evaluate the performance of all the methods with the following metrics.
\begin{itemize}[leftmargin=0.1in,itemsep = -0.02in,topsep=0.01in]
    \item \textit{Total Driver Income (TDI)} is the total income from all the served demands in a day. Better supply-demand balance help to achieve a higher TDI.
    \item \textit{Reposition Income (RI)} is the total income earned from successful reposition recommendations. We use RI$/$TDI to measure how much is earned from repositions.
    \item \textit{Request Response Rate (RRR)} is the ratio between the number of orders got served to the total number of orders. 
    \item \textit{Acceptance Rate} is the ratio of accepted repositions to total recommendations, indicating driver satisfaction.
    \item \textit{Number of Repositions} is the number of accepted recommendations, excluding those for staying in the same grid. This assesses reposition cost for supply-demand balance.
\end{itemize}

\subsubsection{Implementation}
In i-Rebalance, we implemented the actor and critic networks with two fully connected layers (64 units, \textit{ReLU} activations), respectively.
Grid Agent and Vehicle Agents use the same architecture, normalizing outputs with softmax and sigmoid functions separately.
%
We used entropy loss and batch normalization, entropy-$\beta$ is set to 0.01 and the batch size is 10.
We set a decay factor $\gamma=0.98$ and train the network with \textit{Adam} optimizer~\cite{liu2020context}. 
In reward function (Eq.~\ref{Eq:totalRewards}), the weights $\alpha_B=2$, and $\alpha_P=1$ achieve the best performance. 
The models are trained on Intel Core i9-10940K CPU @3.30GHz, NVIDIA GeForce RTX 3090, and 32GB memory.

\subsection{Experimental Results}

\begin{table}[t]
\centering
\renewcommand\arraystretch{1.2}

\footnotesize
\begin{tabular}{|@{}>{\centering\arraybackslash} m{2.4cm}@{}|@{}>{\centering\arraybackslash}m{1.3cm}@{}|@{} >{\centering\arraybackslash}m{1.2cm}@{}|@{} >{\centering\arraybackslash}m{1.1cm}@{}|@{} >{\centering\arraybackslash}m{1.1cm}@{}|@{} >{\centering\arraybackslash}m{1.1cm}@{}|@{}C{1.2cm}@{}|@{}C{1.2cm}@{}|}
\hline
 Methods & Norm. TDI & RI/TDI & RRR & Accept. Rate & \#Repos. \\
\hline
No Reposition & 100.00\%  & N/A & 72.15\%  & N/A & 10,733    \\
\hline
Random & 103.02\%  & 88.85\%  & 73.60\%  & 48.39\% & 9,691     \\
\hline
Demand Greedy & 101.81\%  & 85.73\%  & 73.11\%  & 45.44\%  & 11,061    \\
\hline
Reward Greedy & 102.03\%  & 87.36\%  & 73.27\%  & 50.50\%  & 10828    \\
\hline
Min Cost Flow & 103.00\%   & 87.48\%   & 73.46\%   & 44.68\%   & 7,700     \\
\hline
Contextual DDQN & 102.46\%  & 87.44\%  & 73.33\%  & 46.07\%  & 10,866    \\
\hline
Proportional Repositioning & 102.59\%  & 88.20\%  & 73.60\%  & 22.96\%  & 6,749    \\
\hline
By Collective Preference & 109.00\%  & 95.28\%  & 77.29\%  & 66.48\%  & 6,803    \\
\hline
i-Rebalance & \textbf{109.97\%}  & \textbf{98.70\%}  & \textbf{78.35\%}  & \textbf{84.43\%}  & \textbf{6,625}    \\
\hline
\end{tabular}
\caption{Performance Comparison of Vehicle Reposition Methods. Total Driver Income (TDI) is normalized by dividing that of \textit{No Reposition}. 
}\label{tab:result}
\vspace{-0.2in}
\end{table}

\subsubsection{Performance Comparison}

Table~\ref{tab:result} summarizes the comparison results of our i-Rebalance against the baselines. Overall, i-Rebalance performs the best in all metrics, indicating that i-Rebalance can achieve a good balance of supply and demand, providing the most profitable, satisfactory, and cost-effective reposition recommendations.

\textit{No Reposition} has the lowest TDI by letting the drivers cruise by themselves without cooperating with each other.  \textit{Demand Greedy} has similar performance as it repositions vehicles to current most demanding grids, vehicles are easily gathered in some grids. 
\textit{Reward Greedy} performs better than \textit{Demand Greedy} by incorporating driver preference.
\textit{Random} has slightly higher TDI and ORR than \textit{Reward Greedy} because it disperses vehicles everywhere. 
\textit{Min Cost Flow} slightly worse than \textit{Random} as it minimizes  reposition cost. 
Overall, these non-learning-based methods have worse performance compared to the DRL methods, due to the naive strategies and short-term focus. 

%
Among the DRL methods, 
\textit{Contextual DDQN} has the worst performance as it 
coordinates vehicles by encoding early repositioning outcomes for state of later ones, which might not be sufficient. It also employs fixed rewards, less adaptable to an uncertain environment where drivers can reject repositions.
\textit{Proportional Repositioning} improves coordination via homogeneous joint action, yet ignores individual preference.
\textit{By Collective Preference} enhances performance with driver collective preference, achieving the highest acceptance rate (66.48\%) among the baselines. But it still fails to consider personalized preference.
In contrast, i-Rebalance incorporates personalized preference and coordinates drivers by learning their reposition order, achieving the highest driver income and acceptance rate.

\subsubsection{Ablation Study}
We conducted two ablation studies to evaluate 1)~effectiveness of learning recommendation order, and 2)~effectiveness of incorporating personalized preference with different designs of state and reward. 


\textit{Effectiveness of order learning}. We design 3 variations of i-Rebalance. \textit{Fixed-Order} employs a fixed order prioritizing drivers based on their preference distribution's correlation with supply-demand gaps (measured by Pearson correlation). \textit{Random-Order} employs a random order. \textit{Joint-Action} involves a single agent to jointly determine driver assignments to specific locations. 
Table~\ref{tab:ablation1} summarizes the comparison results. 
i-Rebalance excels by collaborating with the Vehicle Agent for the optimal order learning. \textit{Fix-Order} is slightly less effective than i-Rebalance but surpasses \textit{Random-Order} due to driver preference consideration. \textit{Joint-Action} performs worst due to its larger action space, challenging to find the optimal solutions.


\begin{table}[t]
\centering

\footnotesize
\begin{tabular}{|c|c|c|c|c|c|c|c}

\hline
Methods & Norm. TDI & RI/TDI & Accept. Rate  \\
\hline

Fixed-Order & 109.89\%  & 98.50\% & 83.40\%      \\
\hline
Random-Order & 109.77\%  & 98.44\% & 83.31\%      \\
\hline
Joint-Action & 103.15\%  & 90.18\% & 52.53\%       \\
\hline
i-Rebalance & 109.97\%  & 98.70\% & 84.43\%     \\
\hline

\end{tabular}

\caption{Performance of i-Rebalance variations with different reposition order.}
\label{tab:ablation1}
\vspace{-0.1in}
\end{table}

%


\textit{Effectiveness State and Reward Design for Personalization}. We create 3 i-Rebalance variations with different state and reward structures for driver preference in Vehicle Agent.
\textit{RNP-SNP} lacks both preference satisfaction reward and preference observation in Vehicle Agent, \textit{RNP-SP} has preference observation but not preference reward, and \textit{RP-SNP} incorporates preference reward but lacks preference observation. i-Rebalance combines both reward and observation, known as \textit{RP-SP}. For equitable comparison, we maintain the reposition order learned in i-Rebalance.
Table~\ref{tab:ablation2} summarizes comparison results. The four methods display similar total driver income (TDI) and minor variations in RI/TDI, yet show significant differences in acceptance rates. This shows that i-Rebalance can generate more satisfactory recommendations without sacrificing total driver income. Also, integrating driver preferences in state (\textit{RNP-SP}) is more effective than integrating that in reward (\textit{RP-SNP}).




\begin{table}[t]
\centering

\vspace{-0.1in}
\footnotesize
\begin{tabular}{|c|c|c|c|c|c|c|c|}
\hline
Methods & Norm. TDI & RI/TDI & Accept. Rate  \\
\hline
RNP-SNP & 109.37\%  & 95.69\% & 67.95\%     \\
\hline
RP-SNP & 109.40\%  & 95.75\% & 68.11\%      \\
\hline
RNP-SP & 109.73\%  & 98.29\% & 81.65\%      \\
\hline
RP-SP & 109.97\%  & 98.70\% & 84.43\%       \\
\hline
\end{tabular}

\caption{Performance of i-Rebalance variations with different state and reward design for Vehicle Agent. R: reward, S: state, P: preference, NP: no preference.}
\label{tab:ablation2}
\vspace{-0.2in}
\end{table}

\section{Conclusion}

In this paper, we present i-Rebalance, a personalized vehicle reposition technique to balance supply and demand and to improve driver acceptance of reposition recommendations.
%
i-Rebalance models driver cruising preferences with a lightweight LSTM network and estimates their decisions on accepting the recommendations through a user study. It sequentially repositions vehicles with dual DRL agents to learn the reposition order and personalized recommendation, respectively. 
%
%
Evaluation shows i-Rebalance 
greatly improves reposition acceptance of
drivers and total driver income
compared to all baselines.
Future work includes generalizing our sequential learning strategy to coordinate multi-agents for personalized decision making, and exploring personalized reposition incentives for drivers.  





\section{Acknowledgments}
This work was supported in part by the National Natural Science Foundation of China under Grant No. 62232004,  No.61972086, No. 62102082, No. 62072099, No.61932007; in part by the Jiangsu Natural Science Foundation of China under Grant No. BK20210203, BK20230024.

\bibliography{reference}

\begin{thebibliography}{27}
\providecommand{\natexlab}[1]{#1}

\bibitem[{Al-Abbasi et~al.(2019)Al-Abbasi, Ghosh, and Aggarwal}]{al2019deeppool}
Al-Abbasi, A.~O.; Ghosh, A.; and Aggarwal, V. 2019.
\newblock Deeppool: Distributed model-free algorithm for ride-sharing using deep reinforcement learning.
\newblock \emph{IEEE Transactions on Intelligent Transportation Systems}, 20(12): 4714--4727.

\bibitem[{Gao et~al.(2022)Gao, Wen, and Tao}]{gao2022distributed}
Gao, G.; Wen, Y.; and Tao, D. 2022.
\newblock Distributed energy trading and scheduling among microgrids via multiagent reinforcement learning.
\newblock \emph{IEEE Transactions on Neural Networks and Learning Systems}, 34: 10638 -- 10652.

\bibitem[{He and Shin(2020)}]{he2020spatio}
He, S.; and Shin, K.~G. 2020.
\newblock Spatio-temporal capsule-based reinforcement learning for mobility-on-demand coordination.
\newblock \emph{IEEE Transactions on Knowledge and Data Engineering}, 34(3): 1446--1461.

\bibitem[{Jiao et~al.(2021)Jiao, Tang, Qin, Li, Zhang, Zhu, and Ye}]{jiao2021real}
Jiao, Y.; Tang, X.; Qin, Z.~T.; Li, S.; Zhang, F.; Zhu, H.; and Ye, J. 2021.
\newblock Real-world ride-hailing vehicle repositioning using deep reinforcement learning.
\newblock \emph{Transportation Research Part C: Emerging Technologies}, 130: 103289.

\bibitem[{Jin et~al.(2019)Jin, Zhou, Zhang, Li, Guo, Qin, Jiao, Tang, Wang, Wang et~al.}]{jin2019coride}
Jin, J.; Zhou, M.; Zhang, W.; Li, M.; Guo, Z.; Qin, Z.; Jiao, Y.; Tang, X.; Wang, C.; Wang, J.; et~al. 2019.
\newblock Coride: joint order dispatching and fleet management for multi-scale ride-hailing platforms.
\newblock In \emph{Proceedings of the 28th ACM International Conference on Information and Knowledge Management}, 1983--1992.

\bibitem[{Li et~al.(2019)Li, Qin, Jiao, Yang, Wang, Wang, Wu, and Ye}]{li2019efficient}
Li, M.; Qin, Z.; Jiao, Y.; Yang, Y.; Wang, J.; Wang, C.; Wu, G.; and Ye, J. 2019.
\newblock Efficient ridesharing order dispatching with mean field multi-agent reinforcement learning.
\newblock In \emph{The World Wide Web Conference}, 983--994.

\bibitem[{Lin et~al.(2018)Lin, Zhao, Xu, and Zhou}]{lin2018efficient}
Lin, K.; Zhao, R.; Xu, Z.; and Zhou, J. 2018.
\newblock Efficient large-scale fleet management via multi-agent deep reinforcement learning.
\newblock In \emph{Proceedings of the 24th ACM SIGKDD International Conference on Knowledge Discovery \& Data Mining}, 1774--1783.

\bibitem[{Liu et~al.(2021)Liu, Chen, and Chen}]{liu2021meta}
Liu, C.; Chen, C.-X.; and Chen, C. 2021.
\newblock Meta: A city-wide taxi repositioning framework based on multi-agent reinforcement learning.
\newblock \emph{IEEE Transactions on Intelligent Transportation Systems}, 23(8): 13890--13895.

\bibitem[{Liu et~al.(2022)Liu, Wu, Lyu, Li, Ye, and Qu}]{liu2022deep}
Liu, Y.; Wu, F.; Lyu, C.; Li, S.; Ye, J.; and Qu, X. 2022.
\newblock Deep dispatching: A deep reinforcement learning approach for vehicle dispatching on online ride-hailing platform.
\newblock \emph{Transportation Research Part E: Logistics and Transportation Review}, 161: 102694.

\bibitem[{Liu et~al.(2020)Liu, Li, and Wu}]{liu2020context}
Liu, Z.; Li, J.; and Wu, K. 2020.
\newblock Context-aware taxi dispatching at city-scale using deep reinforcement learning.
\newblock \emph{IEEE Transactions on Intelligent Transportation Systems}, 23(3): 1996--2009.

\bibitem[{Lyu et~al.(2019)Lyu, Liu, Chen, Mangal, Liu, Chen, and Lim}]{lyu2019od}
Lyu, Y.; Liu, X.; Chen, H.; Mangal, A.; Liu, K.; Chen, C.; and Lim, B. 2019.
\newblock OD morphing: Balancing simplicity with faithfulness for OD bundling.
\newblock \emph{IEEE Transactions on Visualization and Computer Graphics}, 26(1): 811--821.

\bibitem[{Mao et~al.(2020)Mao, Liu, and Shen}]{mao2020dispatch}
Mao, C.; Liu, Y.; and Shen, Z.-J.~M. 2020.
\newblock Dispatch of autonomous vehicles for taxi services: A deep reinforcement learning approach.
\newblock \emph{Transportation Research Part C: Emerging Technologies}, 115: 102626.

\bibitem[{Ming et~al.(2020)Ming, Hu, Dong, and Zheng}]{ming2020effective}
Ming, L.; Hu, Q.; Dong, M.; and Zheng, B. 2020.
\newblock An effective fleet management strategy for collaborative spatio-temporal searching: GIS cup.
\newblock In \emph{Proceedings of the 28th International Conference on Advances in Geographic Information Systems}, 651--654.

\bibitem[{Oda and Joe-Wong(2018)}]{oda2018movi}
Oda, T.; and Joe-Wong, C. 2018.
\newblock MOVI: A model-free approach to dynamic fleet management.
\newblock In \emph{IEEE INFOCOM 2018-IEEE Conference on Computer Communications}, 2708--2716.

\bibitem[{Qin et~al.(2022)Qin, Zhu, and Ye}]{qin2022reinforcement}
Qin, Z.~T.; Zhu, H.; and Ye, J. 2022.
\newblock Reinforcement learning for ridesharing: An extended survey.
\newblock \emph{Transportation Research Part C: Emerging Technologies}, 144: 103852.

\bibitem[{Wang et~al.(2018)Wang, Chow, Lyu, Lee, Kwong, Li, and Zeng}]{8105835}
Wang, R.; Chow, C.; Lyu, Y.; Lee, V.~S.; Kwong, S.; Li, Y.; and Zeng, J. 2018.
\newblock TaxiRec: Recommending Road Clusters to Taxi Drivers Using Ranking-Based Extreme Learning Machines.
\newblock \emph{IEEE Transactions on Knowledge and Data Engineering}, 30(03): 585--598.

\bibitem[{Wang et~al.(2020)Wang, Ma, Wang, Jin, Wang, Tang, Jia, and Yu}]{wang2020traffic}
Wang, X.; Ma, Y.; Wang, Y.; Jin, W.; Wang, X.; Tang, J.; Jia, C.; and Yu, J. 2020.
\newblock Traffic flow prediction via spatial temporal graph neural network.
\newblock In \emph{Proceedings of The Web Conference 2020}, 1082--1092.

\bibitem[{Wen et~al.(2017)Wen, Zhao, and Jaillet}]{wen2017rebalancing}
Wen, J.; Zhao, J.; and Jaillet, P. 2017.
\newblock Rebalancing shared mobility-on-demand systems: A reinforcement learning approach.
\newblock In \emph{2017 IEEE 20th International Conference on Intelligent Transportation Systems}, 220--225.

\bibitem[{Xi et~al.(2022)Xi, Zhu, Ye, Lv, Tang, and Wang}]{xi2022hmdrl}
Xi, J.; Zhu, F.; Ye, P.; Lv, Y.; Tang, H.; and Wang, F.-Y. 2022.
\newblock Hmdrl: Hierarchical mixed deep reinforcement learning to balance vehicle supply and demand.
\newblock \emph{IEEE Transactions On Intelligent Transportation Systems}, 23(11): 21861--21872.

\bibitem[{Xie et~al.(2018)Xie, Zhang, and Zhang}]{xie2018privatehunt}
Xie, X.; Zhang, F.; and Zhang, D. 2018.
\newblock PrivateHunt: Multi-source data-driven dispatching in for-hire vehicle systems.
\newblock \emph{Proceedings of the ACM on Interactive, Mobile, Wearable and Ubiquitous Technologies}, 2(1): 1--26.

\bibitem[{Xu et~al.(2018)Xu, Rahmatizadeh, B{\"o}l{\"o}ni, and Turgut}]{xu2018taxi}
Xu, J.; Rahmatizadeh, R.; B{\"o}l{\"o}ni, L.; and Turgut, D. 2018.
\newblock Taxi dispatch planning via demand and destination modeling.
\newblock In \emph{2018 IEEE 43rd Conference on Local Computer Networks}, 377--384.

\bibitem[{Xu et~al.(2023)Xu, Yue, Yu, Yang, Zhang, Li, and Li}]{xu2023multi}
Xu, M.; Yue, P.; Yu, F.; Yang, C.; Zhang, M.; Li, S.; and Li, H. 2023.
\newblock Multi-agent reinforcement learning to unify order-matching and vehicle-repositioning in ride-hailing services.
\newblock \emph{International Journal of Geographical Information Science}, 37(2): 380--402.

\bibitem[{Xu et~al.(2020)Xu, Men, Li, Jin, Li, Yang, Liu, Wang, and Qie}]{xu2020recommender}
Xu, Z.; Men, C.; Li, P.; Jin, B.; Li, G.; Yang, Y.; Liu, C.; Wang, B.; and Qie, X. 2020.
\newblock When recommender systems meet fleet management: Practical study in online driver repositioning system.
\newblock In \emph{Proceedings of The Web Conference 2020}, 2220--2229.

\bibitem[{Yang et~al.(2020)Yang, Wang, Xu, and Huang}]{yang2020multiagent}
Yang, Y.; Wang, X.; Xu, Y.; and Huang, Q. 2020.
\newblock Multiagent reinforcement learning-based taxi predispatching model to balance taxi supply and demand.
\newblock \emph{Journal of Advanced Transportation}, 2020: 1--12.

\bibitem[{Zhang et~al.(2020{\natexlab{a}})Zhang, Li, Zhou, and Luo}]{cGAIL}
Zhang, X.; Li, Y.; Zhou, X.; and Luo, J. 2020{\natexlab{a}}.
\newblock cgail: Conditional generative adversarial imitation learning—an application in taxi drivers’ strategy learning.
\newblock \emph{IEEE transactions on big data}, 8(5): 1288--1300.

\bibitem[{Zhang et~al.(2020{\natexlab{b}})Zhang, Li, Zhou, Zhang, and Luo}]{TrajGAIL}
Zhang, X.; Li, Y.; Zhou, X.; Zhang, Z.; and Luo, J. 2020{\natexlab{b}}.
\newblock Trajgail: Trajectory generative adversarial imitation learning for long-term decision analysis.
\newblock In \emph{2020 IEEE International Conference on Data Mining (ICDM)}, 801--810.

\bibitem[{Zong et~al.(2018)Zong, Wu, and Jia}]{zong2018taxi}
Zong, F.; Wu, T.; and Jia, H. 2018.
\newblock Taxi drivers’ cruising patterns—Insights from taxi GPS traces.
\newblock \emph{IEEE Transactions on Intelligent Transportation Systems}, 20(2): 571--582.

\end{thebibliography}
\newpage


\end{document}